\pdfoutput=1
\documentclass{amsart}
\usepackage{color}

\usepackage[utf8]{inputenc}

\newcommand{\bce}{\begin{center}}
\newcommand{\ece}{\end{center}}



\newcommand{\kl}[1]{\left(#1\right)}
\newcommand{\ekl}[1]{\left[#1\right]}

\newcommand{\abs}[1]{\left\vert#1\right\vert}

\newcommand{\norm}[1]{\left\| #1 \right\|}



\newcommand{\gm}{\gamma}

\newcommand{\kp}{\kappa}

\newcommand{\lm}{\lambda}



\newcommand{\Hcl}{\mathcal{H}}

\newcommand{\Ncl}{\mathcal{N}}








\theoremstyle{plain}

\theoremstyle{remark}

\theoremstyle{definition}








\newcommand{\trace}{\operatorname{trace}}
\newcommand{\cost}{\operatorname{cost}}
\newcommand{\ga}{g_{\lambda}}
\newcommand{\ra}{r_{\lambda}}
\newcommand{\Sx}{S_{x}}
\newcommand{\Bnu}{B_{\nu}}

\newcommand{\textfrc}[1]{{\frcseries#1}}

\newcommand{\littleo}{\textfrc o}
\newcommand{\bigo}{\mathcal O}

\numberwithin{equation}{section}

\theoremstyle{plain}

\newtheorem{theorem}{Theorem}
\newtheorem{proposition}{Proposition}

\newtheorem{lemma}{Lemma}
\newtheorem{cor}{Corollary}

\theoremstyle{definition}
\newtheorem{ass}{Assumption}
\newtheorem*{xmpl}{Example}
\theoremstyle{remark}
\newtheorem{remark}{Remark}

\title[Nystr\"{o}m subsampling for Low smoothness] {Analysis of
  regularized Nystr\"{o}m subsampling for regression functions of low
  smoothness}


\author {
  Shuai Lu
}
\address{School of Mathematical Sciences, Fudan
    University, 200433 Shanghai, China}
\email{slu@fudan.edu.cn}
\author{
  Peter Math\'{e}
}
\address{Weierstrass Institute for Applied Analysis and
    Stochastics, Mohrenstraße 39, 10117 Berlin, Germany}
\email{peter.mathe@wias-berlin.de}
\author{
 Sergiy Pereverzyev Jr.
}
\address{Department of Neuroradiology,
    Medical University of Innsbruck, Anichstraße 35, 6020 Innsbruck,
    Austria}
\thanks{Shuai Lu is supported by NSFC (No.11522108, 91730304), Shanghai Municipal Education Commission (No.16SG01) and Special Funds for Major State Basic Research Projects of China (2015CB856003). Sergiy Pereverzyev Jr. gratefully acknowledges the support of the Austrian Science Fund (FWF): project P 29514-N32.}
\email{sergiy.pereverzyev@i-med.ac.at}

\date{\today}

\begin{document}

\subjclass[2010]{68T05, 62G08}

\keywords{Nyström subsampling, kernel learning, low smoothness, learning rate.}

\begin{abstract}
  This paper studies a Nystr{\"o}m type subsampling approach to large
  kernel learning methods in the misspecified case, where the
  target function is not assumed to belong to the reproducing kernel
  Hilbert space generated by the underlying kernel. This case is
  less understood, in spite of its practical importance. To model such
  a case, the smoothness of target functions is described in terms of
  general source conditions. It is surprising
    that almost for the whole range of the source conditions,
    describing the misspecified case, the corresponding learning rate
    bounds can be achieved with just one value of the regularization
    parameter. This observation allows a formulation of mild
  conditions under which the plain Nystr{\"o}m subsampling can be
  realized with subquadratic cost maintaining the guaranteed  learning rates.
\end{abstract}

\maketitle


\section{Introduction}
The supervised learning can often be formalized as the problem of
minimizing the expected squared loss
\begin{align*}
  \mathcal{E}(f)=\int\limits_{X\times Y} (f(x)-y)^2d\rho(x,y),
\end{align*}
given a training set $z=\left\{z_i,i=1,2,...,\left|z\right| \right\}$
of samples $z_i=(x_i,y_i)$ drawn independently from a fixed but
unknown (joint)  probability distribution $\rho$ on $Z=X\times Y$,
where $X$ is a set of $d$-dimensional input vectors $x$, and $Y$ is a
set of corresponding outputs labeled by real numbers $y$. Here we
  denote $\left|z\right|$ be the number of the observations. The
primary objective is the regression function $f_\rho$ that minimizes
$\mathcal{E}(f)$ and can be written as
\begin{align*}
  f_\rho(x)=\int\limits_{Y} yd\rho(y|x), \quad  x\in X,
\end{align*}
where $\rho(y|x)$ is the conditional distribution at $x$ induced by
$\rho$ such that $\rho(x,y)=\rho(y|x)\rho_X(x)$, and $\rho_X$ is the
marginal probability measure on $X$.

Since the conditional distribution $\rho(y|x)$
is unknown, the above integral representation for $f_\rho$ is of
no help in practice, and the goal is to find an estimator $\hat f_z$,
on the base of the given training data $z$, that approximates the
unknown regression function~$f_\rho$ well with high probability.

Ideally, a good estimator $\hat f_z$ should have small excess loss
$\mathcal{E}(\hat f_z)-\mathcal{E}(f_\rho)$. Due to a version of
Fubini's theorem we have
\begin{align*}
  \mathcal{E}(\hat
  f_z)-\mathcal{E}(f_\rho)=\left\|\hat f_z-f_\rho\right\|_\rho^2,
\end{align*}
where
$\left\|\cdot\right\|_\rho:=\left\|\cdot\right\|_{L{_2}(X,\rho_X)}$ is
the norm in the space $L{_2}(X,\rho_X)$ of square integrable functions
with respect to the marginal probability measure. Therefore, the
standard way of measuring the performance of the estimator $\hat f_z$
is by studying its convergence to $f_\rho$ in
$ \left\|\cdot\right\|_\rho$-norm.

In kernel machine learning the estimator~$\hat f_z$ is sought within
some  hypothesis space,  often taken to be a
reproducing kernel Hilbert space $\mathcal{H}_K$ associated with a
Mercer kernel $K:X\cdot X\rightarrow\mathbb{R}$. The space~$\mathcal
H_{K}$ is then defined to be the
closure of the linear span of the set of functions $K_x=K(\cdot,x)$,
$x\in X$, with the inner product satisfying
\begin{align*}
  \left\langle K_x,K_{y} \right\rangle_{\mathcal{H}_{K}} := K(x,y),\quad x,y\in X.
\end{align*}

One of the main drawbacks of kernel learning machines is that the
storing and manipulating the kernel Gram matrix
$\mathbb{K}_{\left|z\right|}={\left\{K(x_i,x_j)\right\}}^{\left|z\right|}_{i,j=1}$
require $\bigo(\left|z\right|^2)$ space, and the amount of computations
required to find $\hat f_z\in \mathcal{H}_{K}$ scales as
$\bigo(\left|z\right|^3)$, that can become intractable in the case of the
so-called Big Data, when $\left|z\right|$ grows.
The Nystr{\"o}m type subsampling \cite{Williams,Smola} is a popular
tool for overcoming these limitations. 

Up to now, the theoretical analysis of the Nystr{\"o}m approach has
been carried out extensively in the well-specified case, when
the regression function $f_\rho\in\mathcal{H}_{K}$
\cite{Bach2013,Kriukova2017,LP2013,Myleiko2017,Rudi2015,Rudi2017FALKONAO}. In
the present paper we concentrate ourselves on the misspecified case
such that $f_\rho\in L_2 (X,\rho_X)\setminus\mathcal{H}_{K}$,
which is much less understood, in spite of its practical importance.

The quality of the approximation~$\hat f_{z}$ depends on smoothness
properties of the underlying regression function~$f_{\rho}$, often
given in terms of source conditions and the canonical inclusion
operator $J_K:\mathcal{H}_K\hookrightarrow L_{2}(X,\rho_X)$.

We highlight that the misspecified case yields the fact that
the (unknown) target function $f_{\rho}$ does not belong
to~$\mathcal H_{K}$, and hence that for the regularized empirical
risk functional~$T_{z}^{\lm}$ from~(\ref{eq_generalfunctional}) below, we will
have that $T_{z}^{\lambda}(f_{\rho}) = +\infty$. Such an oversmoothing
penalty term is not standard in classic regularization theory, see
e.g.~\cite{EHN1996}, but it has gained attention in numerical
differentiation \cite{WJC2002,WWY2006} and regularization in Hilbert
scales \cite{Natterer1984,HM2018}. 
In the present setting the convergence analysis shall be carried out in the
norm in the space $L_2(X, \rho_X)$ instead of the norm in the
reproducing kernel Hilbert space $\mathcal{H}_K$.

Due to \cite{Rudi2015} the Nystr{\"o}m approach can be seen as a
combination of random projections with a regularization scheme, and
the regularization theory tells us that such a scheme should have
enough qualification to utilize the whole smoothness of $f_\rho$. On
the other hand, from this perspective it follows that, because of low
smoothness of $f_\rho$, even a scheme with a modest qualification,
such as the standard Tikhonov regularization, is sufficient for the
misspecified case. For this reason, in the present study we restrict
ourselves to the Nystr{\"o}m subsampling for Tikhonov regularization
known also as the kernel ridge regression (KRR).

The learning rate (i.e., the convergence rate of the approximant
  to the target function $f_\rho $ in
  $\left\|\cdot\right\|_\rho $-norm) of KRR in the misspecified case
was first studied in \cite{Smale2007}. As one may see from that study,
for $f_\rho\in L_2 (X,\rho_X) \setminus\mathcal{H}_{K}$ the learning
rate of KRR cannot be in general described by the same formula as in
the well-specified case $f_\rho\in\mathcal{H}_{K}$. A uniform
description for both cases was obtained in \cite{Steinwart} under
additional assumptions on the inclusion operator $J_{K}$,
which may not be always satisfied. To the best of our knowledge, the
best known learning rates, that are valid for KRR with arbitrary
Mercer kernel functions $K$, have been recently given in
\cite{ShaoboLin}. In the present research we study conditions under
which the above mentioned rates can be achieved at a subquadratic
cost (with respect to
the number of observations $\left|z\right|$)
by KRR combined with the Nystr{\"o}m approach.

The paper is organized as follows. In the next section we recall KRR
setting. Then we follow \cite{Rudi2015} and consider the Nystr{\"o}m
approach to KRR as a projection method regularized by Tikhonov
regularization. In Section 3 we estimate the learning rate of KRR
combined with the plain Nystr{\"o}m subsampling in the misspecified case.
The technical proofs are given separately in Section~\ref{sec:proofs}.

In contrast to previous studies, we employ general source
conditions to measure the smoothness
$f_\rho\in L_2 (X,\rho_X)\setminus \mathcal{H}_{K}$. One important
and interesting observation here is that almost for the whole range of
source conditions describing the misspecified case the corresponding
learning rate bounds can be achieved with the same value of Tikhonov
regularization parameter that can be chosen a priori and without any
knowledge of the smoothness of $f_\rho$. This observation allows us to
formulate simple conditions under which the plain Nystr{\"o}m
subsampling can be realized with subquadratic cost, still maintaining
guaranteed learning rates.

\section{KRR with Nystr\"{o}m subsampling}

Recall that in KRR the goal is to approximate $f_\rho$ by the
minimizer $f_{z}^\lambda$ of the regularized empirical risk functional

\begin{align}\label{eq_generalfunctional}
  T_{z}^\lambda(f)&
                    :=\left|z\right|^{-1}\sum\limits_{i=1}^{\left|z\right|}(f(x_i)-y_i)^2+\lambda\left\|f\right\|^2_{\mathcal{H}_{K}},\quad
                    f\in \mathcal H_{K}.
\end{align}
For the subsequent analysis we shall use the identical
operator~$I\colon \mathcal H_{K}\to \mathcal H_{K}$,
the canonical inclusion~$J_{K}\colon \mathcal H_{K}\to  L_2
(X,\rho_X)$,
and the sampling
operators~$\Sx:\mathcal{H}_K\rightarrow\mathbb{R}^{\left|z\right|}$,
given as~$\Sx f=(f(x_i))^{\left| z\right|}_{i=1} $,
and its adjoint~$\Sx^{\ast}:\mathbb{R}^{\left|z\right|}\rightarrow\mathcal{H}_K$.
It is known, cf.~\cite{ShaoboLin}, that the product of the inclusion
operator $J_K$ and its adjoint is the integral operator defined by
\begin{align}\label{eq:jkjkast}
  \left( J_{K}J_{K}^* \right) f(x)= \int\limits_{X} K(x,t) f(t)
  d\rho_X(t),\quad f\in L_2 (X,\rho_X),\ x\in X.
\end{align}		

The celebrated representer theorem of G. S. Kimeldorf and G. Wahba
tells us that the minimizer of~(\ref{eq_generalfunctional}) has the form
\begin{align*}
  f_z^\lambda &=\sum\limits_{i=1}^{\left|z\right|}c_i\cdot K(\cdot,
  x_i), \
  c=(c_i)^{\left|z\right|}_{i=1}=(\mathbb{K}_{\left|z\right|}+\lambda
  \mathbb{I})^{-1}\mathbb{Y},
\end{align*}
where $\mathbb{I}$ is the $\left|z\right|\times\left|z\right|$
diagonal identity matrix and
$\mathbb{Y}=(y_i)^{\left|z\right|}_{i=1}$.

It is clear that KRR has at least quadratic computational cost
$\bigo(\left|z\right|^2)$, as this is the cost of computing the
kernel Gram matrix
$\mathbb{K}_{\left|z\right|}=\lbrace {K(x_{i},
  x_{j})}\rbrace_{i,j=1}^{\left|z\right|}$.
Therefore, in the setting, where $\left|z\right|$ is large,
one tries to avoid the computation of the minimizer
$f_{z}^\lambda$.

\subsection{Plain Nystr\"{o}m subsampling}
\label{sec:subsampling}

In the Nystr{\"o}m algorithms this is done by replacing
$\mathbb{K}_{\left|z\right|}$ with a smaller low-rank matrix obtained
by random subsampling of columns of $\mathbb{K}_{\left|z\right|}$.
In the forthcoming
analysis, we restrict attention to the so-called \emph{plain
  Nystr{\"o}m subsampling}
approach, where the points $(x_i, y_i)$ forming the subsample~$z^\nu$ are sampled
uniformly at random without replacement from the training set
$z$.

An important observation made in \cite{Rudi2015} is that the Nystr{\"o}m
subsampling can be interpreted as a restriction of the minimization of
$T_z^\lambda(f)$ to the (randomly chosen) space
\begin{align*}
  \mathcal{H}_K^{z^\nu}:=\lbrace{f : f=\sum\limits_{x_i:(x_i,y_i)\in
  z^\nu}}c_iK(\cdot,x_i), \ c_i \in \mathbb{R}\rbrace\subset\mathcal{H}_K ,
\end{align*}
where $z^\nu$ is a randomly selected subset of $z$ with the
cardinality $\left|z^\nu\right|\ll\left|z\right|$.

We let~$P_{z^\nu}:\mathcal{H}_K\rightarrow \mathcal{H}^{z^\nu}_K$ be
the orthogonal projection operator in $\mathcal{H}_K$ with the range
$\mathcal{H}^{z^\nu}_K$.
It is clear that $P_{z^\nu}$ has a probabilistic character and depends
on the way we perform the subsampling $z^\nu$.

In the analysis the composition~$\Bnu:= \Sx
P_{z^\nu}\colon \mathcal H_{K}\to \mathbb R^{|z|}$ will be relevant.
 Then the minimizer
$f_{z,z^\nu}^\lambda$ of $T_z^\lambda (f)$ over
$\mathcal{H}_K^{z^\nu}$ is given as
\begin{align} \label{eq:2.3} f_{z,z^\nu}^\lambda &=(\lambda
  I+P_{z^{\nu}}\Sx^{\ast}\Sx P_{z^\nu})^{-1} P_{z^\nu} \Sx^{\ast} \mathbb{Y},\\
&= P_{z^{\nu}}(\lambda
  I+ \Bnu^{\ast}\Bnu)^{-1} \Bnu^{\ast} \mathbb{Y},\notag
\end{align}
where the latter follows because~$f_{z,z^\nu}^\lambda\in \mathcal{H}_K^{z^\nu}$.
Note that $f^\lambda_{z,z^\nu}$ can be computed with a computational
cost
$$
\cost(f^\lambda_{z,z^\nu}) = \mathcal
O(\left|z\right|\cdot\left|z^{\nu}\right|^2),\quad \text{as}
\left|z\right|\geq \left|z^{\nu}\right|\to\infty.
$$
(see, e.g., \cite{Rudi2015}).


\subsection{Assumptions}
\label{sec:assumptions}

The answer to the learning rate depends on additional assumptions.
The first two assumptions, concerning properties of the underlying
kernel and the noise moments, will not be referenced explicitly throughout the study.
\begin{ass}
  [kernel properties]\label{ass:kernel}
The kernel~$K\colon X\times X\to \mathbb R$ is
continuous, symmetric, positive definite and
$$
  \sup\limits_{ x\in X } K(x,x) = \kp < \infty.
$$
Then it is clear that
$$
  \sup\limits_{x\in X} \norm{ K_x }_{  L_2 \kl{  X, \rho_X  }  }  \leq
  \sup\limits_{x\in X} \norm{ K_x }_{  C(X)  } \leq
  \sup\limits_{x\in X} \norm{ K_x }_{  \Hcl_K  }^{2} = \kp.
$$
\end{ass}

Under Assumption~\ref{ass:kernel} the operator~$J_{K}J_{K}^*$
from~(\ref{eq:jkjkast}) has a
finite trace. Specifically, for each~$\lambda>0$ it holds that
$$
\mathcal N_x(\lm) :=\langle K(\cdot,x),(\lambda
I+J^*_KJ_K)^{-1}K(\cdot,x)\rangle_{\mathcal{H}_K} = \| (\lm +
J^*_KJ_K)^{-1/2} K_{x}\|^{2}< \infty.
$$
We highlight the related quantities
\begin{align}
	\label{eq:2.2n}
	\Ncl_{\infty}(\lm) &:=\sup \limits_{x\in X} \Ncl_x(\lambda) \quad (\leq \kp/\lm),
  \\
\intertext{and }
\mathcal N(\lm) &:=\int\limits_X
  \mathcal N_x(\lambda)d\rho_X(x)=\trace\lbrace(\lambda I+J^*_KJ_K)^{-1}J^*_KJ_K\rbrace.
\end{align}
The function~$\mathcal N$ measures the capacity of the
RKHS $\mathcal H_{K}$ in the space~$L_{2}(X,\rho_{X})$, and it is
called {\it the effective dimension}. It is well known that this is a
decreasing function of~$\lm$ with~$\lim_{\lm\to 0+}\mathcal
N(\lm)=\infty$, provided that the RKHS~$\mathcal H_{K}$ is infinite
dimensional, c.f. \cite{Zha05}.
Extended discussion on properties of the effective dimension for general operators can be found in \cite{LM2014}.

For the efficiency of the Nystr\"{o}m
subsampling we shall need an additional assumption on the kernel,
specified in Assumption~\ref{ass:kernel-src}, below.

\begin{ass}
  [noise moments]\label{ass:noise}
The family of random variables~$\varepsilon_{x}:= y - f_{\rho}(x),\
x\in X$ has all moments~$p\geq 2$, which satisfy
$$
\mathbf E| \varepsilon_{x}|^{p} \leq \frac 1 2 p! M^{p-2}
\sigma^{2},\quad x\in X, \ \text{ a.e.},
$$
for some positive constants~$M$ and~$\sigma$.
\end{ass}

Next, an assumption is made on the underlying smoothness of the regression function~$f_\rho$.
\begin{ass}
  [source condition]\label{ass:smoothness}
There is an operator concave index function\footnote{
A function~$\varphi:\ekl{0, d}   \rightarrow[0,\infty)$
is called an index function if it is continuous, strictly increasing,
obeying $\varphi (0)=0$. It is called operator concave if for
self-adjoint operator~$C,C_{1}\colon L_2 (X,\rho_X) \to L_2 (X,\rho_X) $ we
have that~$\varphi\left( \frac 1 2 (C + C_{1}) \right) \geq \frac 1 2 \left(
  \varphi(C) + \varphi(C_{1})\right)$.
}~$\varphi\colon [0,d]\to[0,\infty) $, for some~$d> \|J_{K}J_{K}^*\|$, such that
$$
f=\varphi (J_{K}J_{K}^*)v_f, \  \left\|v_f\right\|_\rho \leq 1.
$$
The function~$t \mapsto \sqrt t/\varphi(t)$ is nondecreasing.
\end{ass}

\begin{remark}\label{rem:smoothness}
  First, from~\cite{Mathe2008} we know that for every
$f\in L_2 (X,\rho_X)$ and $\varepsilon>0$ there exists an
index function
$\varphi:\ekl{ 0, \left\|J_{K}  J_{K}^* \right\| }
\rightarrow[0,\infty)
$
such that
\begin{align} \label{eq:2.1} f=\varphi (J_{K}J_{K}^*)v_f, \
  \left\|v_f\right\|_\rho \leq (1+\varepsilon)\left\| f \right\|_\rho.
\end{align}
In the
context of learning it is used starting from the paper
\cite{Smale2007}, where $f=f_\rho$ was assumed to satisfy
\eqref{eq:2.1} with $\varphi(t)=t^r, r\in (0,1].$

Secondly, the following is known. If the function $t\rightarrow\sqrt t/\varphi(t)$ is
nonincreasing, then the image of $\varphi (J_{K}J_{K}^*)$ is contained
in $\mathcal{H}_K$. Therefore, in order to treat the low-smoothness
case we assume that
$t\rightarrow\sqrt{t}/\varphi(t)$ is nondecreasing.
Thus, the misspecified case studied here corresponds
to \eqref{eq:2.1} with $\varphi(t)$ increasing not faster than
$\sqrt{t}$.

Moreover, as in \cite{Kriukova2017}, in order to control the effect
of subsampling, we assume that $\varphi$ is operator concave on
$[0,d],\ d > \left\|J_{K}J_{K}^*\right\|$. Note that previously
considered H{\"o}lder-type index functions
$\varphi(t)=t^r, r\in (0,\frac{1}{2}]$, as well as logarithmic
functions $\varphi(t)=\log^{-r}\frac{1}{t},\ { 0 < r\leq 1}$, are
operator monotone, and hence operator concave. An important
implication of operator monotonicity is that there is a number
$d_\varphi$ depending only on $\varphi$ such that for any self-adjoint operators
$C,C_1:L_2(X,\rho_X)\rightarrow L_2(X,\rho_X)$ with spectra in $[0,d]$
it holds
\begin{align} \label{eq:2.2}
  \left\|\varphi(C)-\varphi(C_1)\right\|_{L_2(X,\rho_X)\to
    L_2(X,\rho_X)}\leq d_\varphi
  \varphi\left(\left\|C-C_1\right\|_{L_2(X,\rho_X)\to
    L_2(X,\rho_X)}\right).
\end{align}
\end{remark}


\section{Main results}
\label{sec:results}


\subsection{Error bound for Nystr{\"o}m subsampling in the misspecified case}

We formulate the main result below. For this to hold we must obey  the following
relations for the parameter~$\lm>0$, the overall sample
size~$\left|z\right|$, and the subsample size~$\left|z^{\nu}\right|$.
Given, for~$0< \delta < 1$, a confidence level~$1-\delta$, we require that
\begin{align}\label{eq:2.5}
  \left|z^{\nu}\right| &\geq c\mathcal N_\infty
                         (\lambda)\log\dfrac{1}{\lambda}\log\dfrac{1}{\delta},\\
\intertext{and}
  \lambda & \in \left[c{\left|z\right|}^{-1}\log {\dfrac{{\left|z\right|}}{\delta}},\ \left\|J^*_KJ_K\right\|_{\mathcal{H}_K\rightarrow\mathcal{H}_K} \right].\label{eq:3.3}
\end{align}
Concerning the choice of the size~$|z^{\nu}|$ of the subsample, two competing
goals are relevant. First, it should be large enough to maintain the
learning rate as this was obtained by using the full sample~$z$. On
the other hand, it should be as small as possible to reduce the
computational burden. Here this choice in (\ref{eq:2.5}) is analyzed in the low smoothness situation.

Here and in the sequel, we adopt the convention that $c$ denotes a generic positive coefficient, which can vary from estimation
to estimation and may only depend on basic parameters, such as $K$, $\rho$.
Also, for functions~$a$, $b$ depending on $\lm$ or $\abs{z}$, respectively, the relation $a \asymp b$ means that
$a = \bigo(b)$ and $b = \bigo(a)$ as $\lm\to 0$, or $\abs{z}\to\infty$.

Note that the Nystr{\"o}m approximant from~\eqref{eq:2.3}
represents an element of $\mathcal{H}_K$, and to estimate
$\mathcal{E}(f^\lambda_{z,z^\nu})$ we need to embed
$f^\lambda_{z,z^\nu}$ in $L_2(X,\rho_X)$. Then the error decomposes as
  \begin{align}\label{eq:3.6}
    \begin{split}
\left\|J_Kf^\lambda_{z,z^\nu}-f_{\rho}\right\|_{\rho}
&\leq \left\|f_{\rho}-J_K(\lambda I+
  \Bnu^{\ast}\Bnu)^{-1}P_{z^\nu}J^*_Kf_{\rho}\right\|_{\rho}
    \\
&+ \left\|J_K(\lambda I+\Bnu^{\ast}\Bnu)^{-1}
  P_{z^\nu}(J^*_Kf_{\rho}-\Sx^{\ast}\mathbb{Y})\right\|_{\rho},
    \end{split}
  \end{align}
which can be regarded as decomposition into approximation error and the
sample error, respectively. By estimating both terms in the right-hand side of (\ref{eq:3.6}), we establish the main error estimate.
\begin{theorem}\label{th:1}
  Assume that in the plain Nystr{\"o}m subsampling the values
  $\left|z^{\nu}\right|$ and $\lambda$ satisfy \eqref{eq:2.5} and
  \eqref{eq:3.3}. If $f_{\rho}$ obeys Assumption~\ref{ass:smoothness}
  for the index function~$\varphi$, then with probability at
  least $1-\delta$ we have
  \begin{align*}
& \left\|f_{\rho}-J_K(\lambda I+ \Bnu^{\ast}\Bnu)^{-1}P_{z^\nu}J^*_Kf_{\rho}\right\|_{\rho} \leq c\varphi(\lambda)\log {\dfrac{1}{\delta}}
\left( 1+\sqrt{\dfrac{{\Ncl(\lambda)}}{\lambda{\left|z\right|}}}\right); \\
& \left\|J_K(\lambda I+\Bnu^{\ast}\Bnu)^{-1}
  P_{z^\nu}(J^*_Kf_{\rho}-\Sx^{\ast}\mathbb{Y})\right\|_{\rho} \leq c\sqrt \lm
  \log \dfrac{1}{\delta}\left( 1 + \sqrt{\dfrac{\Ncl(\lambda)}{\lambda\left|z\right|}}\right);
  \end{align*}
  and the total error estimate
  \begin{align*}
\left\|J_Kf^\lambda_{z,z^\nu}-f_{\rho}\right\|_{\rho}\leq c\varphi(\lambda)\log {\dfrac{1}{\delta}}
\left( 1+\sqrt{\dfrac{{\Ncl(\lambda)}}{\lambda{\left|z\right|}}}\right) .
  \end{align*}
\end{theorem}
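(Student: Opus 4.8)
The plan is to bound the two terms in the decomposition~\eqref{eq:3.6} separately, treating the first as an \emph{approximation (bias) error} and the second as a \emph{sample (variance) error}, and then to combine them. Throughout I would work in $L_2(X,\rho_X)$ and use the companion covariance operators $T:=J_K^*J_K$ on $\HK$ and $L:=J_KJ_K^*$ on $L_2(X,\rho_X)$, which share their nonzero spectrum, together with the intertwining identities $J_K(\lambda I+T)^{-1}=(\lambda I+L)^{-1}J_K$ and $J_K^*\varphi(L)=\varphi(T)J_K^*$. The whole argument rests on two probabilistic operator-perturbation estimates, which I would establish (in the spirit of~\cite{Rudi2015,ShaoboLin}) in the separate proofs section: (i) under~\eqref{eq:3.3} the empirical covariance $\Sx^*\Sx$ is close to $T$ in the regularized sense, so that $\norm{(\lambda I+T)^{1/2}(\lambda I+\Sx^*\Sx)^{-1/2}}\le c$ with probability $\ge 1-\delta/2$; and (ii) under the subsample condition~\eqref{eq:2.5} the Nyström projection does not degrade the regularized operator, i.e. $\norm{(I-P_{z^\nu})(\lambda I+T)^{1/2}}\le c\sqrt\lambda$, whence $\norm{(\lambda I+T)^{1/2}(\lambda I+\Bnu^*\Bnu)^{-1/2}}\le c$ with probability $\ge 1-\delta/2$. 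Both reduce the empirical-projected resolvent $(\lambda I+\Bnu^*\Bnu)^{-1}P_{z^\nu}$ to the population resolvent $(\lambda I+T)^{-1}$ up to a bounded multiplicative factor, and a refinement of (i) supplies the fluctuation factor $\sqrt{\Ncl(\lambda)/(\lambda\abs z)}$.

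For the \textbf{approximation error} I would write $f_\rho=\varphi(L)v_f$ and compare the estimator operator $A_\nu:=J_K(\lambda I+\Bnu^*\Bnu)^{-1}P_{z^\nu}J_K^*$ with the population Tikhonov operator $L(\lambda I+L)^{-1}$. Splitting
\[
(I-A_\nu)\varphi(L)=\lambda(\lambda I+L)^{-1}\varphi(L)+\bigl(L(\lambda I+L)^{-1}-A_\nu\bigr)\varphi(L),
\]
the first summand is bounded by the scalar qualification inequality $\sup_{0\le t\le d}\lambda\varphi(t)/(\lambda+t)\le c\varphi(\lambda)$, which is exactly where the assumption that $t\mapsto\sqrt t/\varphi(t)$ is nondecreasing (so $\varphi$ grows no faster than $\sqrt t$, well within Tikhonov qualification) is used. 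For the second summand I would move the source element through the intertwining to write $J_K^*f_\rho=\varphi(T)w$ with $\norm{w}_{\HK}\le\sqrt{\kp}$, express $L(\lambda I+L)^{-1}-A_\nu$ through the resolvent identity applied to $T$ and $\Bnu^*\Bnu$, and insert the estimates (i), (ii); here operator concavity of $\varphi$, via~\eqref{eq:2.2}, is what lets me transfer the smoothness $\varphi$ across the projected operator and retain the clean prefactor $\varphi(\lambda)$ rather than a Lipschitz surrogate. This produces the bound $c\varphi(\lambda)(1+\sqrt{\Ncl(\lambda)/(\lambda\abs z)})\log(1/\delta)$.

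For the \textbf{sample error} I would write $J_K^*f_\rho-\Sx^*\mathbb Y=(g_\rho-\widehat g)-\chi$, where $g_\rho=J_K^*f_\rho$, $\widehat g=\abs z^{-1}\sum_i f_\rho(x_i)K_{x_i}$ is its empirical mean (so $\E\widehat g=g_\rho$), and $\chi=\abs z^{-1}\sum_i\varepsilon_i K_{x_i}$ is the pure-noise term. Using $\norm{J_K(\lambda I+\Bnu^*\Bnu)^{-1/2}}\le c$ together with (i), (ii), both contributions reduce to regularized Hilbert-space norms $\norm{(\lambda I+T)^{-1/2}(\cdot)}_{\HK}$, which I would control by a Bernstein inequality in $\HK$: the variance proxies are governed by $\E\norm{(\lambda I+T)^{-1/2}K_x}^2=\Ncl(\lambda)$ and by $\Ncl_\infty(\lambda)$, and Assumption~\ref{ass:noise} supplies the moment condition for $\chi$. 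This yields a bound of the form $c\log(1/\delta)\bigl(\Ncl_\infty(\lambda)/(\abs z\sqrt\lambda)+\sqrt{\Ncl(\lambda)/\abs z}\bigr)$; finally the lower bound on $\lambda$ in~\eqref{eq:3.3} converts the first term into $c\sqrt\lambda$, giving $c\sqrt\lambda(1+\sqrt{\Ncl(\lambda)/(\lambda\abs z)})\log(1/\delta)$. The total estimate then follows from the triangle inequality in~\eqref{eq:3.6} together with the observation that, since $t\mapsto\sqrt t/\varphi(t)$ is nondecreasing on $[0,d]$ and $\lambda\le\norm{J_K^*J_K}$, one has $\sqrt\lambda\le c\varphi(\lambda)$, so the sample term is absorbed into the bias term.

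The hard part will be the second summand of the approximation error together with the projection estimate (ii): because the regression function is misspecified, the smoothness lives only in the $L_2$ source condition and must be routed through $J_K^*$ into $\HK$ as $\varphi(T)w$, and because $\varphi$ is of low order it is \emph{not} Lipschitz, so the standard perturbation bounds comparing $\varphi$ of two nearby operators fail. Controlling how the random projection $P_{z^\nu}$ acts on $\varphi(T)$ without losing the factor $\varphi(\lambda)$ is exactly the step that forces the operator-concavity hypothesis and inequality~\eqref{eq:2.2}, and making the probabilistic budget in (i), (ii) match the prescribed $\Ncl_\infty(\lambda)\log(1/\lambda)$ subsample size in~\eqref{eq:2.5} is the most delicate accounting.
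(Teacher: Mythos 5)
Your proposal is correct and follows essentially the same route as the paper: the same decomposition~\eqref{eq:3.6}, the qualification bound coming from the monotonicity of $t\mapsto\sqrt t/\varphi(t)$, operator concavity via~\eqref{eq:2.2} combined with the projection bound~\eqref{eq:2.4} to keep the prefactor $\varphi(\lambda)$, the sandwich estimates~\eqref{eq:3.1}--\eqref{eq:3.2} with the concentration bounds~\eqref{eq:3.4}--\eqref{eq:3.5} for the empirical fluctuations, and finally the absorption $\sqrt\lambda\leq c\,\varphi(\lambda)$. The only (immaterial) difference is that you compare the bias against the population Tikhonov operator $J_KJ_K^*(\lambda I+J_KJ_K^*)^{-1}$, whereas the paper first passes to the projected population operator $J_K(\lambda I+P_{z^\nu}J_K^*J_KP_{z^\nu})^{-1}P_{z^\nu}J_K^*$ and only then applies the resolvent identity to the sampling discrepancy.
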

It is interesting to observe that the approximation error
dominates the sample error, which is not standard in regularization
theory. This is a consequence of the misspecified source condition,
Assumption~\ref{ass:smoothness}, for a function~$\varphi$
with~$\varphi(\lm)\geq c \sqrt\lm$. We will provide more explanation in Remark \ref{rem:approxi} after the proof of the above theorem, which has been postponed in Section \ref{se_4}.

\subsection{Parameter choice}
\label{sec:parameter}

A somehow surprising message of the above theorem is that the $\lambda$-dependent term
\begin{align*}
  \theta_{\varphi}(\lambda)=\varphi(\lambda)\left(1+\sqrt{\frac{\Ncl(\lambda)}{\left|z\right|\lambda}}\right),\quad \lm>0.
\end{align*}
bounding (the square root of) the excess loss
$\mathcal{E}(f^\lambda_{z,z^\nu})-\mathcal{E}(f_{\rho})$ attains its
minimum (up to a constant factor) at a value of the regularization
parameter $\lambda=\lambda_0$, which can be chosen a priori and does
not require the knowledge of the index function $\varphi$. Precisely,
let~$\lambda_{0} = \lm_{0}(\left|z\right|)$ solve the equation
\begin{equation}
  \label{eq:lambda0}
  \mathcal N(\lambda)=\lambda\left|z\right|.
\end{equation}
Notice that this equation always has a unique solution, and that it
does not depend on the underlying smoothness, as expressed in the
function~$ \varphi$.
Also, as~$\left|z\right|\to \infty$ we have
that~$\lm_{0}(\left|z\right|)\to 0$.


\begin{cor}\label{cor:lambda0}
For any index function~$\varphi$ in Assumption~\ref{ass:smoothness} we have
  \begin{align}
    \varphi(\lambda_0)\leq \min\limits_{\lambda}\theta_{\varphi}(\lambda)\leq2 \varphi(\lambda_0),\label{eq:lambda-minr}
  \end{align}			
  where $\lambda_0$ is chosen in~(\ref{eq:lambda0}).

Consequently, under the conditions of Theorem~\ref{th:1}, and if~$\lm_{0}$
obeys~(\ref{eq:3.3}) then we have that
$$
\|J_Kf^\lambda_{z,z^\nu}-f_{\rho} \|_{\rho} =
\bigo(\varphi(\lm_{0}(|z|))),\quad \text{as}\ \ |z|\to \infty.
$$
\end{cor}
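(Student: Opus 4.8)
The plan is to prove the two-sided estimate~(\ref{eq:lambda-minr}) by a direct analysis of~$\theta_{\varphi}$, and then to read off the stated rate by invoking Theorem~\ref{th:1} at the specific value $\lambda=\lambda_0$. The whole argument hinges on the observation that the defining equation~(\ref{eq:lambda0}) for $\lambda_0$ exactly balances the two factors in $\theta_{\varphi}$: since $\mathcal N(\lambda_0)=\lambda_0\abs{z}$ we have $\mathcal N(\lambda_0)/(\abs{z}\lambda_0)=1$, whence
\[
\theta_{\varphi}(\lambda_0)=\varphi(\lambda_0)\kl{1+\sqrt 1}=2\varphi(\lambda_0).
\]
The upper bound in~(\ref{eq:lambda-minr}) then follows at once, because $\min_{\lambda}\theta_{\varphi}(\lambda)\leq\theta_{\varphi}(\lambda_0)=2\varphi(\lambda_0)$.

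For the lower bound I would show that $\theta_{\varphi}(\lambda)\geq\varphi(\lambda_0)$ for every admissible $\lambda$, splitting according to the position of $\lambda$ relative to $\lambda_0$. If $\lambda\geq\lambda_0$, then monotonicity of the index function~$\varphi$ gives $\theta_{\varphi}(\lambda)\geq\varphi(\lambda)\geq\varphi(\lambda_0)$, using merely that the second factor is at least~$1$. If $\lambda<\lambda_0$, I would first exploit that the effective dimension $\mathcal N$ is decreasing, so that $\mathcal N(\lambda)\geq\mathcal N(\lambda_0)=\lambda_0\abs{z}$ and hence $\mathcal N(\lambda)/(\abs{z}\lambda)\geq\lambda_0/\lambda$; dropping the additive~$1$ this gives
\[
\theta_{\varphi}(\lambda)\geq\varphi(\lambda)\sqrt{\frac{\lambda_0}{\lambda}}=\frac{\varphi(\lambda)}{\sqrt\lambda}\,\sqrt{\lambda_0}.
\]
The crucial step is now the source-condition hypothesis that $t\mapsto\sqrt t/\varphi(t)$ is nondecreasing, equivalently that $t\mapsto\varphi(t)/\sqrt t$ is nonincreasing; since $\lambda<\lambda_0$ this yields $\varphi(\lambda)/\sqrt\lambda\geq\varphi(\lambda_0)/\sqrt{\lambda_0}$, and substituting back produces $\theta_{\varphi}(\lambda)\geq\varphi(\lambda_0)$, as required.

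Finally, to obtain the convergence rate I would apply the total error estimate of Theorem~\ref{th:1} at $\lambda=\lambda_0$. By hypothesis $\lambda_0$ satisfies~(\ref{eq:3.3}), the subsample condition~(\ref{eq:2.5}) is in force, and the remaining assumptions of the theorem hold, so with probability at least $1-\delta$,
\[
\norm{J_Kf^{\lambda_0}_{z,z^\nu}-f_\rho}_\rho\leq c\,\theta_{\varphi}(\lambda_0)\log\frac1\delta=2c\,\varphi(\lambda_0)\log\frac1\delta,
\]
which is $\bigo(\varphi(\lambda_0(\abs{z})))$ for fixed $\delta$ as $\abs{z}\to\infty$. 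The only point needing a little care is the compatibility of the interval constraint~(\ref{eq:3.3}) with the balanced choice $\lambda=\lambda_0$; since the corollary explicitly assumes that $\lambda_0$ obeys~(\ref{eq:3.3}), this is granted and no genuine obstacle remains. The substance of the proof is thus the lower bound, where the nondecreasing behaviour of $\sqrt t/\varphi(t)$ is exactly what prevents undersmoothing ($\lambda<\lambda_0$) from improving upon the balanced value~$\lambda_0$.
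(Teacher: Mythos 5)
Your proposal is correct and follows essentially the same route as the paper: the upper bound comes from evaluating $\theta_{\varphi}$ at the balance point $\lambda_0$ where $\mathcal N(\lambda_0)=\lambda_0\abs{z}$, and the lower bound splits into $\lambda\geq\lambda_0$ (monotonicity of $\varphi$) and $\lambda<\lambda_0$ (monotonicity of $\mathcal N$ together with the nonincreasing behaviour of $t\mapsto\varphi(t)/\sqrt t$), after which the rate is read off from Theorem~\ref{th:1} at $\lambda=\lambda_0$. The only cosmetic difference is that you apply the two monotonicity facts in two separate steps, whereas the paper combines them in a single chain of inequalities.
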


\begin{remark}
  \label{rem:emp-eff-dim}
The effective dimension $\Ncl(\lambda)$ can be rather
accurately estimated from the data (see,
e.g.,~\cite[Prop.~1]{Rudi2015}) that makes the parameter choice
$\lambda=\lambda_0$ practically feasible.
\end{remark}
\begin{remark}
  \label{rem:lm0-feasible}
We comment when the above choice of~$\lm_{0}$  obeys the
condition from~(\ref{eq:3.3}). We claim that this holds true whenever
the effective dimension grows at least as~$\log(1/\lm)$, as~$\lm\to 0$.
Indeed, we have that~$|z|\lm_{0}\leq 1$, and hence that~$\log(|z|)
\leq \log(1/\lm_{0})$, such that in this case we find that
$$
|z|\lm_{0} = \mathcal N(\lm_{0}) \geq \log(1/\lm_{0})\geq \frac 1 2 \log\left(\frac{|z|}{\delta}\right),
$$
provided that for given confidence level~$1-\delta$, the sample
zise~$|z|$ is large
enough.
This condition on the effective dimension is fulfilled for all types of the behavior of the effective
dimension discussed in the literature, (see, e.g., the discussion with
power type behavior in Section~\ref{sec:efficiency}, below).
\end{remark}

\subsection{Full data}
\label{sec:full-data}

Note that in the case when $\left|z^{\nu}\right| = \left|z\right|$, the inequality~\eqref{eq:2.5} is satisfied
because in view of~\eqref{eq:2.2n} and~\eqref{eq:3.3}, $\Ncl_{\infty}(\lm) \log(1/\lm) $ is of
lower order than $\left|z^{\nu}\right| = \left|z\right|$, i.e.
$   \Ncl_{\infty}(\lm) \log(1/\lm)  = \bigo\kl{
\abs{z} \cdot \log^{-1}\abs{z} \cdot \log\log \abs{z}
}  $. Therefore, Theorem~\ref{th:1} has the following corollary.

\begin{cor}
If $z^{\nu} = z$, then under the conditions of Theorem~\ref{th:1}, we have
\begin{align}\label{eq:3.14}
\left\| J_K f^{\lm_0}_{z} - f_{\rho} \right\|_{\rho}=
  \mathcal O\left(\varphi(\lambda_0)\log\dfrac{1}{\delta}\right)\quad
  \text{as}\ |z| \to \infty.
\end{align}
\end{cor}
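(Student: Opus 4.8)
The plan is to obtain the statement as the specialization of Theorem~\ref{th:1} (equivalently of Corollary~\ref{cor:lambda0}) to the degenerate subsampling~$z^{\nu}=z$, and then to exploit the defining equation~\eqref{eq:lambda0} for~$\lambda_{0}$. First I would check that the hypotheses of Theorem~\ref{th:1} are in force for~$\lambda=\lambda_{0}$ and~$\left|z^{\nu}\right|=\left|z\right|$. The subsample constraint~\eqref{eq:2.5} holds for all sufficiently large~$\left|z\right|$, since, as noted in the paragraph preceding the statement, $\Ncl_{\infty}(\lambda)\log(1/\lambda)=\bigo(\left|z\right|\log^{-1}\left|z\right|\log\log\left|z\right|)$ is of strictly lower order than~$\left|z^{\nu}\right|=\left|z\right|$. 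The parameter constraint~\eqref{eq:3.3} for~$\lambda_{0}$ is part of the standing assumptions, and is automatic as soon as the effective dimension grows at least logarithmically, cf. Remark~\ref{rem:lm0-feasible}.

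Next I would identify the Nystr\"{o}m approximant with the full KRR estimator. When~$z^{\nu}=z$ the hypothesis space~$\mathcal{H}_{K}^{z^{\nu}}$ is the span of~$\{K(\cdot,x_{i})\}_{i=1}^{\left|z\right|}$, which by the representer theorem already contains the minimizer~$f_{z}^{\lambda}$ of~$T_{z}^{\lambda}$ over the whole space~$\mathcal{H}_{K}$. Hence minimizing over~$\mathcal{H}_{K}^{z^{\nu}}$ returns the same element, i.e.~$f_{z,z^{\nu}}^{\lambda}=f_{z}^{\lambda}$, which justifies the notation~$f_{z}^{\lambda_{0}}$ appearing in~\eqref{eq:3.14}. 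Correspondingly~$P_{z^{\nu}}$ reduces to the orthogonal projection onto this span, under which the full KRR solution is invariant.

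Finally I would insert~$\lambda=\lambda_{0}$ into the total error bound of Theorem~\ref{th:1}. Because~$\lambda_{0}$ solves~\eqref{eq:lambda0}, the capacity-dependent factor becomes a universal constant,
\[
1+\sqrt{\frac{\Ncl(\lambda_{0})}{\lambda_{0}\left|z\right|}}=1+1=2,
\]
so the bound reduces to~$2c\,\varphi(\lambda_{0})\log(1/\delta)$, which is precisely the asserted rate~$\bigo(\varphi(\lambda_{0})\log(1/\delta))$ as~$\left|z\right|\to\infty$. The only genuine point (rather than an obstacle) is the a priori parameter choice~$\lambda_{0}$, whose defining balance~$\Ncl(\lambda_{0})=\lambda_{0}\left|z\right|$ is exactly what makes the effective-dimension factor collapse to a constant independent of~$\varphi$; once Theorem~\ref{th:1} is available, no further difficulty remains.
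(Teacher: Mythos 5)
Your proposal is correct and follows the same route as the paper, which derives the corollary by noting that for $z^{\nu}=z$ the constraint~\eqref{eq:2.5} is automatically satisfied, then substituting $\lambda=\lambda_{0}$ from~\eqref{eq:lambda0} into the bound of Theorem~\ref{th:1} so that the capacity factor collapses to the constant $2$. Your additional observation that $f^{\lambda}_{z,z^{\nu}}$ coincides with the full KRR minimizer $f^{\lambda}_{z}$ when $z^{\nu}=z$ is left implicit in the paper but is correct and justifies the notation in~\eqref{eq:3.14}.
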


The error of Nystr\"{o}m subsampling that follows from Theorem~\ref{th:1} and
Corollary~\ref{cor:lambda0}, coincides with the best learning rate known in the
misspecified case for KRR with general Mercer kernels and full data,
i.e $z^{\nu}=z$.

\begin{xmpl}
  We discuss the previously considered H{\"o}lder-type index
functions $\varphi(t)=t^r, \ r\in (0,1/2]$, and under the usual
assumption on the effective dimension
$\Ncl(\lambda)=\bigo(\lambda^{-s}), \ s\in(0,1]$. Then the bound~\eqref{eq:3.14}
is of order $\bigo\left(\left|z\right|^{    -r/(s+1)     }     \right)$.
For KRR with full data, this result is in accordance with~\cite{ShaoboLin}.
\end{xmpl}


\subsection{Efficiency of Subsampling}
\label{sec:efficiency}

Now we are in position to discuss conditions under which the plain
Nystr{\"o}m subsampling achieves \eqref{eq:3.14} with subquadratic
cost~$ \littleo(\left|z\right|^2)$.
In order to actually establish the superiority of the subsampling an
additional assumption is made, borrowed from~\cite{VitRosToi14}.

\begin{ass}
  [source condition for kernel]\label{ass:kernel-src}
There exist $\gamma\in (0,1]$ and
$c_{\gamma}>0$ such that for all $x\in X$ the kernel sections
$K(\cdot,x)\in\mathcal{H}_K$ satisfy the source condition
\begin{align} \label{eq:3.16} K(\cdot,x)= (J^*_KJ_K)^{\gamma
    /2}\upsilon_{x}, \ \left\|\upsilon_x\right\|_{\mathcal{H}_K}\leq
  c_\gamma.
\end{align}
\end{ass}

\begin{remark}\label{rem:kernel}
  As discussed in Remark~\ref{rem:smoothness} there is always an index
  function, say~$\psi$, which guarantees that
$$
K(\cdot,x)= \psi(J^*_KJ_K)\upsilon_{x}, \
\left\|\upsilon_x\right\|_{\mathcal{H}_K}\leq
 (1 + \varepsilon) \| K_{x}\|_{  \Hcl_K  }
\leq (1 + \varepsilon)  \sqrt{\kp} .
$$
Thus, Assumption~\ref{ass:kernel-src} requires this index function to be
of at least power type.
\end{remark}

We mention the following consequence of Assumption~\ref{ass:kernel-src}.
\begin{lemma}\label{lem:gamma-bound}
  Under Assumption~\ref{ass:kernel-src} we have that
$$
 \mathcal N_{\infty}(\lambda) \leq c^2_\gamma \lambda^{\gamma-1}.
$$
\end{lemma}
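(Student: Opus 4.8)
The plan is to reduce the operator statement to a one–dimensional spectral estimate via functional calculus. Recall from~\eqref{eq:2.2n} that $\Ncl_{\infty}(\lambda) = \sup_{x\in X}\Ncl_{x}(\lambda)$ with $\Ncl_{x}(\lambda) = \norm{(\lambda I + J^*_KJ_K)^{-1/2}K_x}_{\Hcl_K}^{2}$, so it suffices to bound $\Ncl_{x}(\lambda)$ uniformly in $x$. First I would insert the kernel source condition from Assumption~\ref{ass:kernel-src}, writing $K_{x} = (J^*_KJ_K)^{\gamma/2}\upsilon_{x}$ with $\norm{\upsilon_x}_{\Hcl_K}\leq c_{\gamma}$, and then pull out the vector norm:
$$
\Ncl_{x}(\lambda) = \norm{(\lambda I + J^*_KJ_K)^{-1/2}(J^*_KJ_K)^{\gamma/2}\upsilon_{x}}_{\Hcl_K}^{2} \leq c_{\gamma}^{2}\,\norm{(\lambda I + J^*_KJ_K)^{-1/2}(J^*_KJ_K)^{\gamma/2}}^{2}.
$$

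Writing $T := J^*_KJ_K$, which is self-adjoint and nonnegative on $\Hcl_K$, the spectral theorem identifies the remaining operator norm with the supremum of a scalar multiplier over the spectrum of $T$ (contained in $[0,\infty)$):
$$
\norm{(\lambda I + T)^{-1/2}T^{\gamma/2}}^{2} = \sup_{t\geq 0}\frac{t^{\gamma}}{\lambda + t}.
$$

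The key step, and the only one requiring a short computation, is to show that this supremum is bounded by $\lambda^{\gamma-1}$. I would use the substitution $t = \lambda s$, which isolates the $\lambda$-dependence as $\lambda^{\gamma-1}\,s^{\gamma}/(1+s)$ and thereby reduces the claim to the scale-free inequality $\sup_{s\geq 0} s^{\gamma}/(1+s) \leq 1$. For $\gamma\in(0,1)$ the maximizer is $s^{*} = \gamma/(1-\gamma)$, yielding the value $\gamma^{\gamma}(1-\gamma)^{1-\gamma}\leq 1$ (the last inequality following from concavity of the logarithm, since $\gamma\log\gamma + (1-\gamma)\log(1-\gamma)\leq 0$); for $\gamma = 1$ the supremum equals $1$ in the limit $s\to\infty$. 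Substituting back gives $\Ncl_{x}(\lambda)\leq c_{\gamma}^{2}\lambda^{\gamma-1}$ uniformly in $x$, and taking the supremum over $x\in X$ completes the argument.

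The proof is essentially routine; the only (modest) obstacle is verifying that the scalar supremum is controlled by the clean constant $1$ rather than by some $\gamma$-dependent factor, which is what allows the stated bound to carry exactly the constant $c_{\gamma}^{2}$ inherited from the source condition. If one were content with a constant depending on $\gamma$, even the crude estimate $t^{\gamma}\leq \lambda^{\gamma} + \lambda^{\gamma-1}t$ would suffice, but the substitution argument above gives the sharp form directly.
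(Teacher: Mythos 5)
Your proposal is correct and follows essentially the same route as the paper: insert the kernel source condition $K_x=(J_K^*J_K)^{\gamma/2}\upsilon_x$ into the definition of $\Ncl_x(\lambda)$, pull out $\norm{\upsilon_x}^2\le c_\gamma^2$, and bound the resulting operator norm by the scalar supremum $\sup_{t>0} t^{\gamma}/(\lambda+t)\le\lambda^{\gamma-1}$. The only difference is that you spell out the elementary optimization behind that last inequality, which the paper states without detail.
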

\begin{proof}
  This simply follows from
\begin{align*}
  \mathcal N_{\infty}(\lambda)  &=\sup\limits_{x\in X}\left\|(\lambda
                         I+J^*_KJ_K)^{-1/2}K(\cdot,x)\right\|^2_{\mathcal{H}_K}\\
&\leq\sup\limits_{x\in X}\left\|\upsilon_x\right\|^2_{\mathcal{H}_K}\sup\limits_{t>0}[(\lambda+t)^{-1/2}t^{\gamma/2}]^2
  \\
                     &\leq c^2_\gamma \sup_{t>0} (\lambda+t)^{-1}t^{\gamma}\leq c^2_\gamma\lambda^{\gamma-1},
\end{align*}
which completes the proof.
\end{proof}

Recall that $f^{\lambda}_{z,z^{\nu}}$ can be computed with a
computational cost
$\bigo(\left|z\right|\cdot\left|z^{\nu}\right|^2)$, and note that
\eqref{eq:2.5} is the only condition on the subsampling size
$\left|z^{\nu}\right|$ that is needed in Theorem~\ref{th:1}. Then from
Corollary~\ref{cor:lambda0} it follows that the Nystr{\"o}m approximation
$f^{\lambda_0}_{z,z^{\nu}}$ realizing the order \eqref{eq:3.14} can be
computed with a computational cost
\begin{align}\label{eq:3.15}
  \cost\left(f^{\lambda_0}_{z,z^{\nu}}\right)=\bigo \left(\left|z\right|\cdot\left(\mathcal
  N_{\infty}(\lambda_0)\log\dfrac{1}{\lambda_0}\right)^2\right).
\end{align}

If we stay with the standard assumption that
$\mathcal N(\lambda)   \asymp   \lambda^{-s}$,
then from the very definition and Lemma~\ref{lem:gamma-bound}, it follows that
\begin{align*}
  \mathcal N(\lambda)\leq \mathcal N_{\infty}(\lambda)\Rightarrow s+\gamma\leq 1.
\end{align*}

In the considered scenario, from~\eqref{eq:lambda0}
and~\eqref{eq:3.15}, we have $\lambda_0  \asymp  \left|z\right|^{-1/(s+1)}$, and the cost can be
bounded as
\begin{align*}
  \cost(f^{\lambda_0}_{z,z^{\nu}})=\bigo\left(\left|z\right|\cdot\left|z\right|^{ 2(1-\gamma)/(s+1) }
	\log^2\left|z\right|\right)=\bigo\left(\left|z\right|^{ (3+s-2\gamma)/(1+s)   }
	\log^2\left|z\right|\right),
\end{align*}
which is subquadratic whenever~$2\gamma+s>1$. We summarize this as
\begin{proposition}\label{prop:2}

Assume that Assumption~\ref{ass:kernel-src} holds true and $\Ncl(\lm) \asymp \lm^{-s}$, $s\in ( 0, 1-\gm ]$.
If $2\gm + s > 1$, then the plain Nystr\"{o}m approximation
$   f_{   z, z^{\nu}  }^{  \lm_0  }   $
can be computed at a subquadratic computational cost, and it preserves the learning rate~\eqref{eq:3.14}
guaranteed for the full amount of data.

\end{proposition}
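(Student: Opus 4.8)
The plan is to combine the learning-rate guarantee of Theorem~\ref{th:1} (together with Corollary~\ref{cor:lambda0}) with the cost formula~\eqref{eq:3.15}, specialized to the a priori choice $\lambda=\lambda_0$ from~\eqref{eq:lambda0} and to the smallest admissible subsample size. Under the standing power-type assumption $\Ncl(\lambda)\asymp\lambda^{-s}$, solving the defining equation~\eqref{eq:lambda0}, namely $\Ncl(\lambda)=\lambda\abs{z}$, yields $\lambda_0\asymp\abs{z}^{-1/(s+1)}$. The first task is to check that $\lambda_0$ meets the admissibility condition~\eqref{eq:3.3}, so that Theorem~\ref{th:1} and hence Corollary~\ref{cor:lambda0} apply and deliver the rate $\norm{J_K f^{\lambda_0}_{z,z^\nu}-f_\rho}_\rho=\bigo(\varphi(\lambda_0))$, which is exactly~\eqref{eq:3.14}.

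For the admissibility, the upper bound in~\eqref{eq:3.3} is immediate since $\lambda_0\to 0$ as $\abs{z}\to\infty$, while the lower bound is supplied by Remark~\ref{rem:lm0-feasible}: the power-type effective dimension $\Ncl(\lambda)\asymp\lambda^{-s}$ grows faster than $\log(1/\lambda)$ as $\lambda\to 0$, so the sufficient condition of that remark is met once $\abs{z}$ is large enough for the given confidence level $1-\delta$. With the subsample size chosen as the smallest value satisfying~\eqref{eq:2.5}, i.e.\ $\abs{z^\nu}\asymp\Ncl_\infty(\lambda_0)\log(1/\lambda_0)\log(1/\delta)$, the learning rate is preserved verbatim, so it only remains to estimate the resulting computational cost.

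For the cost I would insert the bound of Lemma~\ref{lem:gamma-bound}, $\Ncl_\infty(\lambda_0)\leq c_\gamma^2\lambda_0^{\gamma-1}$, into~\eqref{eq:3.15}. Using $\lambda_0\asymp\abs{z}^{-1/(s+1)}$ this gives $\Ncl_\infty(\lambda_0)=\bigo(\abs{z}^{(1-\gamma)/(s+1)})$ and hence
\begin{align*}
\cost(f^{\lambda_0}_{z,z^\nu}) = \bigo\kl{\abs{z}\cdot\abs{z}^{2(1-\gamma)/(s+1)}\log^2\abs{z}} = \bigo\kl{\abs{z}^{(3+s-2\gamma)/(1+s)}\log^2\abs{z}}.
\end{align*}
Since the logarithmic factor does not affect the polynomial order, this cost is subquadratic, i.e.\ $\littleo(\abs{z}^2)$, precisely when the exponent satisfies $(3+s-2\gamma)/(1+s)<2$, which rearranges to $1-2\gamma<s$, that is $2\gamma+s>1$, as claimed.

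The genuine difficulty has already been absorbed into Theorem~\ref{th:1}; what remains here is essentially bookkeeping, and the only point requiring care is the competing influence of $\lambda_0$. A smaller $\lambda_0$ improves the guaranteed accuracy $\varphi(\lambda_0)$ but, through $\Ncl_\infty(\lambda_0)\leq c_\gamma^2\lambda_0^{\gamma-1}$, forces a larger minimal subsample and thus a higher cost; the source condition for the kernel (Assumption~\ref{ass:kernel-src}) is exactly what controls this blow-up, and the threshold $2\gamma+s>1$ is the balance point at which the a priori choice $\lambda_0$ is simultaneously accurate and cheap. I would also note in passing that the standing hypothesis $s\leq 1-\gamma$ is forced by $\Ncl(\lambda)\leq\Ncl_\infty(\lambda)$ together with the two power bounds, so the range $s\in(0,1-\gamma]$ is the natural one in which the argument lives.
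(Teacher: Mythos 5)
Your proposal is correct and follows essentially the same route as the paper: insert Lemma~\ref{lem:gamma-bound} and $\lambda_0\asymp\abs{z}^{-1/(s+1)}$ (from~\eqref{eq:lambda0} under $\Ncl(\lambda)\asymp\lambda^{-s}$) into the cost bound~\eqref{eq:3.15}, and observe that the exponent $(3+s-2\gamma)/(1+s)$ is below $2$ exactly when $2\gamma+s>1$. Your explicit verification that $\lambda_0$ satisfies~\eqref{eq:3.3} via Remark~\ref{rem:lm0-feasible} is a small point the paper leaves implicit, but otherwise the arguments coincide.
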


In particular,
if Assumption~\ref{ass:kernel-src} is satisfied with $\gm > 1/2$, 
then the plain Nystr\"{o}m approximation can always be computed at a subquadratic cost still maintaining guaranteed
learning rates.


\section{Proofs}\label{se_4}
\label{sec:proofs}

\subsection{A regularization perspective to KRR}
\label{sec:regularization}

Here we briefly emphasize the aspects of regularization theory which
will be relevant in the subsequent proofs.
We recall the structure of the estimator~$f_{z,z^{\nu}}^{\lm}$
from~(\ref{eq:2.3}) as
$$
 f_{z,z^\nu}^\lambda = \left(\lambda
  I+\Bnu^{\ast}\Bnu\right)^{-1} \Bnu^{\ast}\mathbb{Y},
$$
with~$\Bnu= \Sx P_{z^{\nu}}$. We can write this as
$ f_{z,z^\nu}^\lambda = \ga(\Bnu^{\ast}\Bnu)\Bnu^{\ast}\mathbb{Y}$,
where we introduced the KRR filter function~$\ga(t):= 1/(t + \lm ),\
t,\lm>0$, applied to the non-negative self-adjoint
operator~$\Bnu^{\ast}\Bnu$ via spectral calculus.
  We shall also employ the fact that for any linear bounded operator,
  say~$B$ acting between Hilbert spaces,  and for any bounded function $g$
  it holds~$g(B^*B)B^*=B^*g(BB^*)$. The corresponding residual
function is given as~$\ra(t) := 1 - \ga(t) t =
\lm/(t+\lm),\ t,\lm >0 $. In particular we have that~$0 < \ra(t) \leq 1$. The impact of the residual function on the
given solution smoothness is measured by its qualification, and we
mention the well known result that
\begin{equation}
  \label{eq:quali}
  \sup_{t>0}\left|\ra(t) \varphi(t)\right| \leq \varphi(\lm),\quad \lm>0,
\end{equation}
provided that the index function~$\varphi$ is such that~$\varphi(t)/t$
is non-decreasing, as this is the case for the functions~$\varphi$
which obey Assumption~\ref{ass:smoothness}.
Applying this for the index function~$t \mapsto t^{q}\varphi(t)$,
with~$0 \leq q \leq 1/2$, we find that this still obeys the assumption
for~(\ref{eq:quali}), and hence, (see, e,g., (16) in \cite{Kriukova2017}), we have that	
  \begin{align}\label{eq:3.9}
    \sup_{t>0}\left|\ra(t)t^q\varphi(t)\right|\leq
    c\lambda^q\varphi(\lambda), \ \text{when }q\in[0,\dfrac{1}{2}].
  \end{align} 	
Finally, by the specific structure we see that~$\ga(t) = \ra(t)/ \lm $,
such that~(\ref{eq:3.9}) yields with~$q:= 1/2$ that
\begin{equation}
  \label{eq:ga-bound}
  \left|\ga(t)\sqrt t \varphi(t) \right| \leq \varphi(\lm)
  \lm^{-1/2},\quad \lm>0.
\end{equation}


\subsection{Probabilistic bounds}
\label{sec:prob-bounds}

We shall also use probabilistic bounds.
  From~\cite[Lem.~6]{Rudi2015} and \cite[Cor.~1]{Myleiko2017} it follows that if
$z^\nu$ is subsampled according to the plain Nystr{\"o}m approach,
then with probability at least $1-\delta$ we have
\begin{align} \label{eq:2.4} \left\|\
    J_K(I-P_{z^\nu})\right\|^2_{\mathcal{H}_K\rightarrow
    L_2(X,\rho_X)} \leq 3\lambda,
\end{align}
provided that~\eqref{eq:2.5} holds.

We also recourse to the following inequality
from~\cite[Lem.~5]{Rudi2015}, which assert
\begin{align} \label{eq:3.2} \left\|(\lambda I+J^*_KJ_K)^{1/2}(\lambda
    I+\Sx^{\ast}\Sx)^{1/2}\right\|_{\mathcal{H}_K\rightarrow\mathcal{H}_K}\leq2,
\end{align}
the latter one is satisfied with probability at least $1-\delta$ if~\eqref{eq:3.3} holds.

Moreover, from Lemma 5.1 \cite{Blanchard2016} it follows that for
$\lambda$ satisfying \eqref{eq:3.3} with probability at least
$1-\delta$ we have
\begin{align} \label{eq:3.4} \left\|(\lambda
    I+J^*_KJ_K)^{-1/2}(J^*_KJ_K-\Sx^{\ast}\Sx)\right\|_{\mathcal{H}_K\rightarrow\mathcal{H}_K}\leq
  c
  \log{\dfrac{1}{\delta}}\sqrt{\dfrac{{\Ncl(\lambda)}}{{\left|z\right|}}},
\end{align}
\begin{align}\label{eq:3.5}
  \left\|(\lambda I+J^*_KJ_K)^{-1/2}(J^*_Kf_{\rho}-\Sx^{\ast}\mathbb{Y})\right\|_{\mathcal{H}_K} \leq
	c \log{\dfrac{1}{\delta}}\sqrt{\dfrac{{\Ncl(\lambda)}}{{\left|z\right|}}}.
\end{align}


\subsection{Proof of Theorem~\ref{th:1}}
\label{sec:proof-thm}

We first mention the following  well known bound, using spectral calculus.
  \begin{align}\label{eq:J_K-bound}
    \begin{split}
    \left\|J_K(\lambda I+J_K^* J_K)^{-1/2}\right\|_{\mathcal{H}_{K}\rightarrow L_2(X,\rho_X)}
 & = \left\|(J^*_KJ_K)^{1/2}(\lambda
   I+J^*_KJ_K)^{-1/2}\right\|_{\mathcal{H}_K\rightarrow \mathcal{H}_K}   \\
 &\leq \sup\limits_{t>0}(t/(\lambda +t))^{1/2}\leq 1.
    \end{split}
  \end{align}
Furthermore, the following result will be used, and we refer to~\cite[Lem.~2\&8]{Rudi2015}. For every
choice~$z^{\nu}$ from the sample~$z$ we have that
\begin{align} \label{eq:3.1} \left\|(\lambda I+ \Sx^{\ast}\Sx)^{1/2}P_{z^\nu} (\lambda
    I+P_{z^\nu}\Sx^{\ast}\Sx P_{z^\nu})^{-1}P_{z^\nu}(\lambda
    I+\Sx^{\ast}\Sx)^{1/2}\right\|_{\mathcal{H}_K\rightarrow\mathcal{H}_K}\leq 1.
\end{align}

Recall the error decomposition
\begin{align*}
\left\|J_Kf^\lambda_{z,z^\nu}-f_{\rho}\right\|_{\rho}\nonumber
&\leq \left\|f_{\rho}-J_K(\lambda I+\Bnu^{\ast}\Bnu)^{-1}P_{z^\nu}J^*_Kf_{\rho}\right\|_{\rho}\nonumber  \\
&\quad +\left\|J_K(\lambda I+ \Bnu^{\ast}\Bnu)^{-1} P_{z^\nu}(J^*_Kf_{\rho}-\Sx^{\ast}\mathbb{Y})\right\|_{\rho}.
\end{align*}
The sample error, i.e. the second term on the right hand side of above inequality, can be estimated with the use of~\eqref{eq:J_K-bound}, \eqref{eq:3.1}, \eqref{eq:3.2} and~\eqref{eq:3.5} as follows
  \begin{align} \label{eq:3.7} &\left\| J_K(\lambda
      I+ \Bnu^{\ast}\Bnu)^{-1}P_{z^{\nu}}(J^*_Kf_{\rho}-\Sx^{\ast}\mathbb{Y})
    \right\|_{\rho}\nonumber
    \\
                               &\leq \left\| J_K(\lambda
                                 I+J^*_KJ_K)^{-1/2}\right\|_{\mathcal{H}_K\rightarrow
                                 L_2(X,\rho_X)}\nonumber
    \\
                               &\qquad \times\left\|(\lambda
                                 I+J^*_KJ_K)^{1/2}(\lambda I+\Sx^{\ast}\Sx
                                 )^{-1/2}\right\|_{\mathcal{H}_K\rightarrow\mathcal{H}_K}\nonumber
    \\
                               &\qquad \times \left\|(\lambda
                                 I+\Sx^{\ast}\Sx)^{1/2}(\lambda
                                 I+ \Bnu^{\ast}\Bnu)^{-1}P_{z^{\nu}}(\lambda
                                 I+\Sx^{\ast}\Sx)^{1/2}\right\|_{\mathcal{H}_K\rightarrow\mathcal{H}_K}\nonumber
    \\
                               &\qquad \times \left\|(\lambda
                                 I+\Sx^{\ast}\Sx)^{-1/2}(\lambda
                                 I+J^*_KJ_K)^{1/2}
                                 \right\|_{\mathcal{H}_K\rightarrow\mathcal{H}_K}\nonumber
    \\
                               &\qquad \times\left\|(\lambda
                                 I+J^*_KJ_K)^{-1/2}(J^*_Kf_{\rho}-\Sx^{\ast}\mathbb{Y})\right\|_{\mathcal{H}_K}\nonumber
    \\
                               &\leq 4c \log
                                 \dfrac{1}{\delta}\sqrt{\dfrac{\Ncl(\lambda)}{\left|z\right|}}
                                 \leq 4c \sqrt\lm \log
                                 \dfrac{1}{\delta}\left( 1 +
                                 \sqrt{\dfrac{\Ncl(\lambda)}{\lm\left|z\right|}}
                                 \right).
  \end{align}

 The rest of the proof is to estimate the
  approximation error, i.e.\ the first term
  on the right hand side of~\eqref{eq:3.6}. This can further be decomposed as
  \begin{align} \label{eq:3.8} \left\| f_{\rho}-J_K(\lambda
      I+\Bnu^{\ast}\Bnu)^{-1}P_{z^\nu}J^*_Kf_{\rho}\right\|\leq
    I_1+I_2,
  \end{align}	
  where
  \begin{align*}
    \begin{split}	
      I_1&=\left\|f_{\rho}-J_K(\lambda
        I+P_{z^\nu}J^*_KJ_KP_{z^\nu})^{-1}P_{z^\nu}J^*_Kf_{\rho}\right\|,
      \\
      I_2&=\left\| J_K[(\lambda I+P_{z^\nu}J^*_KJ_KP_{z^\nu})^{-1}-
        (\lambda I+\Bnu^{\ast}\Bnu)^{-1}]P_{z^\nu}J^*_Kf_{\rho}\right\|_{\rho}
      \\
      &=\left\| J_K(\lambda
        I+\Bnu^{\ast}\Bnu)^{-1}P_{z^\nu}(J^*_KJ_K- \Sx^{\ast}\Sx)
      \right.
      \\
      &\quad \times\left.P_{z^\nu}(\lambda
        I+P_{z^\nu}J^*_KJ_KP_{z^\nu})^{-1}P_{z^\nu}J^*_Kf_{\rho}\right\|_{\rho}.
    \end{split}
  \end{align*}
					
To estimate $I_1$ we recall~\eqref{eq:3.9}.			
Moreover, from \eqref{eq:2.2}, \eqref{eq:2.4} it follows that under
the condition \eqref{eq:2.5} we have
  \begin{align} \label{eq:3.10}
    &\left\|\varphi(J_KJ^*_K)-\varphi(J_KP_{z^\nu}J^*_K)\right\|_{L_2(X,\rho_X)\rightarrow
      L_2(X,\rho_X)}\nonumber
    \\
    &\quad \leq
      d_{\varphi}\varphi(\left\|J_K(I-P_{z^{\nu}})J^*_K\right\|_{L_2(X,\rho_X)\rightarrow
      L_2(X,\rho_X)})\leq c\varphi(\lambda).
  \end{align}

Then, using the source condition~\eqref{eq:2.1} with $f=f_{\rho}$,
and the qualification of KRR as in~(\ref{eq:quali}), we can estimate
$I_1$, by using~$\ra(t):= \lm/(\lm + t),\ t,\lm>0$, as follows
  \begin{align*}
    I_1&=\left\|(I-J_KP_{z^\nu}J^*_K(\lambda I+J_KP_{z^\nu}J^*_K)^{-1})f_{\rho}\right\|_{\rho}\nonumber
    \\
       &\leq\left\|\ra(J_KP_{z^\nu}J^*_K)\varphi(J_KP_{z^\nu}J^*_K)\upsilon_{f_{\rho}}\right\|_{\rho}\notag\\
& \qquad
         +\left\|\ra(J_KP_{z^\nu}J^*_K)(\varphi(J_KJ^*_K)-\varphi(J_KP_{z^\nu}J^*_K))\upsilon_{f_{\rho}}\right\|_{\rho}\nonumber
    \\
       &\leq \left\|\upsilon_{f_{\rho}}\right\|_{\rho}
			\kl{ \sup_{t>0}\ra(t)\varphi(t)
         +\left\|\varphi(J_KJ^*_K)-\varphi(J_KP_{z^\nu}J^*_K)\right\|_{L_2(X,\rho_X)\rightarrow L_2(X,\rho_X)}
				}
				\nonumber \\
       &         \leq c\varphi(\lambda).
  \end{align*}

  To estimate $I_2$ we observe that $I_2\leq I_{2,1}\cdot I_{2,2}$,
  where
  \begin{align*}
    I_{2,1}&=\left\|J_K(\lambda I+\Bnu^{\ast}\Bnu)^{-1}P_{z^{\nu}}(J^*_KJ_K-\Sx^{\ast}\Sx) \right\|_{\mathcal{H}_K\rightarrow\mathcal{H}_K},
    \\ I_{2,2}&=\left\|(\lambda I+P_{z^{\nu}}J^*_KJ_KP_{z^{\nu}})^{-1}P_{z^{\nu}}J^*_Kf_{\rho} \right\|_{\mathcal{H}_K}.
  \end{align*}
  By the same chain of arguments as in \eqref{eq:3.7} we obtain that
  \begin{align*}
    I_{2,1}\leq c\log \dfrac{1}{\delta}\sqrt{\dfrac{\Ncl(\lambda)}{\left|z\right|}},
  \end{align*}
  where the only difference is that one needs to use \eqref{eq:3.4}
  instead of \eqref{eq:3.5}.

Observing that $I_{2,2}\leq I_{2,2,1}+I_{2,2,2}$, we then have
  \begin{align*}
    I_{2,2,1}&=\left\|P_{z^{\nu}}J^*_K(\lambda I+J_KP_{z^{\nu}}J^*_K)^{-1}\varphi(J_KP_{z^{\nu}}J^*_K)\upsilon_{f_{\rho}}  \right\|_{\mathcal{H}_K},
    \\
    I_{2,2,2}&=\left\|P_{z^{\nu}}J^*_K(\lambda I+J_KP_{z^{\nu}}J^*_K)^{-1}(\varphi(J_KJ^*_K)-\varphi(J_KP_{z^{\nu}}J^*_K))\upsilon_{f_{\rho}}  \right\|_{\mathcal{H}_K}.
  \end{align*}
  Using~(\ref{eq:ga-bound}) we derive
  \begin{align*}
    I_{2,2,1}& =
               \|\ga(J_KP_{z^{\nu}}J^*_K)(J_KP_{z^{\nu}}J^*_K)^{1/2}\varphi(J_KP_{z^{\nu}}J^*_K)\|\left\|\upsilon_{f_{\rho}}\right\|_{\rho}\\
&\leq \left\|\upsilon_{f_{\rho}}\right\|_{\rho}
\sup_{t>0}\left|\ga(t)\right|t^{1/2}\varphi(t) \leq c \varphi(\lambda)\lambda^{-1/2}.
  \end{align*}

  Moreover, similarly to~(\ref{eq:J_K-bound}),  and \eqref{eq:3.10} there holds
  \begin{align}\label{eq:approxi}
    I_{2,2,2}\leq c \left\|\upsilon_{f_{\rho}}\right\|_{\rho} \varphi(\lambda)\sup_{t>0}\left\|\ga(t)\right|t^{1/2}\leq c\varphi(\lambda)\lambda^{-1/2}.
  \end{align}
  Thus, we have that~$I_{2,2}\leq c \varphi(\lambda)\lambda^{-1/2}$,
  and hence overall
\begin{align}\label{eq:approxi2}
I_2\leq c  \log\dfrac{1}{\delta}\varphi(\lambda)
    \sqrt{  \dfrac{\Ncl(\lambda)}{  \lm  \left|z\right|   }   },\quad \lm>0.
\end{align}
  Combining this with \eqref{eq:3.6}, \eqref{eq:3.7}, \eqref{eq:3.8}
  and \eqref{eq:3.1} we obtain the statement of the theorem, and the
  proof is complete.

\begin{remark}\label{rem:approxi}
We shall emphasize that in Theorem \ref{th:1} the total error estimate is dominated by the approximation error which is actually induced by the estimate of the term $I_{2,2,2}$ in (\ref{eq:approxi}). The misspecified source condition Assumption \ref{ass:smoothness} then yields an increasing function $\varphi(\lambda)\lambda^{-\frac{1}{2}}$ which blows up as $\lambda\rightarrow 0$. As a consequence, the estimate of $I_2$ in (\ref{eq:approxi2}) dominates the sample error.
\end{remark}

\subsection{Proof of Corollary~\ref{cor:lambda0}}
\label{sec:proof-prop}


  The right inequality in~(\ref{eq:lambda-minr}) is obvious by the
  choice of~$\lm_{0}$ from~(\ref{eq:lambda0}). To prove the left
  inequality we distinguish two cases. First, if $\lambda>\lambda_0$
  then $ \theta_{\varphi}(\lambda)>\varphi(\lambda)>\varphi(\lambda_0)$.
  Otherwise, if~$\lambda\leq\lambda_0$, then we use that by assumption the function
  $\lambda\rightarrow\varphi(\lambda)/\sqrt{\lambda}$ is decreasing,
  and hence
  \begin{align*}
    \theta_{\varphi}(\lambda)=
    \dfrac{\varphi(\lambda)}{\sqrt{\lambda}}\left(\sqrt{\lambda}+\sqrt{\dfrac{\Ncl(\lambda)}{\left|z\right|}}\right)\geq
    \dfrac{\varphi(\lambda)}{\sqrt{\lambda}}\sqrt{\dfrac{\Ncl(\lambda)}{\left|z\right|}}\geq
    \dfrac{\varphi(\lambda_0)}{\sqrt{\lambda_0}}\sqrt{\dfrac{\Ncl(\lambda_0)}{\left|z\right|}}=\varphi(\lambda_0).
  \end{align*}
This proves the left hand side bound and completes the proof of the
first assertion. The second one is an immediate application of the
theorem, and the proof is complete.




\end{document}